\newtheorem{lemma}{Lemma}
\newtheorem{prop}{Proposition}
\newtheorem{rem}{Remark}
\newcommand{\R}{\mathbb{R}}
\begin{document}
	
\title{Bounding The Number of Linear Regions in Local Area for Neural Networks with ReLU Activations}
\author{Rui Zhu}
\address{Department of Computer Science, Indiana University, Bloomington, IN 47408}
\email{zhu11@iu.edu}
\author{Bo Lin}
\address{School of Mathematics,	Georgia Institute of Technology, Atlanta, GA 30332}
\email{bo.lin@math.gatech.edu}
\author{Haixu Tang}
\address{Department of Computer Science, Indiana University, Bloomington, IN 47408}
\email{hatang@indiana.edu}

\begin{abstract}
The number of linear regions is one of the distinct properties of the neural networks using piecewise linear activation functions such as ReLU, comparing with those conventional ones using other activation functions. Previous studies showed this property reflected the {\em expressivity} of a neural network family (\cite{montufar2014number}); as a result, it can be used to characterize how the structural complexity of a neural network model affects the function it aims to compute. Nonetheless, it is challenging to directly compute the number of linear regions; therefore, many researchers focus on estimating the bounds (in particular the upper bound) of the number of linear regions for deep neural networks using ReLU. These methods, however, attempted to estimate the upper bound in the entire input space. The theoretical methods are still lacking to estimate the number of linear regions within a specific area of the input space, e.g., a sphere centered at a training data point such as an adversarial example or a backdoor trigger. In this paper, we present the first method to estimate the upper bound of the number of linear regions in any sphere in the input space of a given ReLU neural network. We implemented the method, and computed the bounds in deep neural networks using the piece-wise linear active function. Our experiments showed that, while training a neural network, the boundaries of the linear regions tend to move away from the training data points. In addition, we observe that the spheres centered at the training data points tend to contain more linear regions than any arbitrary points in the input space. To the best of our knowledge, this is the first study of bounding linear regions around a specific data point. We consider our research, with both theoretical proof and software implementation, as a first step toward the investigation of the structural complexity of deep neural networks in a specific input area.
\end{abstract}

\maketitle

\section{Introduction}
It is well recognized that deep neural networks (DNNs) have an impressive fitting ability. The structural maximum complexity grows exponentially with the increasing size (i.e., the number of layers and neurons) of DNNs. Because of this great property, some previous researches (\cite{sontag1998vc,harvey2017nearly}) explained why neural networks were able to approximate extremely complex functions with fewer parameters than other machine learning models. In particular, for DNNs using ReLU, the most popular activation function, their structural complexity (or {\em expressivity}) are often measured by the number of linear regions(Fig \ref{fig:illustration}(a)). However, in practice, the number of linear regions is estimated by using relatively loose upper bounds: as shown recently, Deep ReLU networks had surprisingly fewer linear regions than the maximum that they can reach (\cite{hanin2019deep}). Besides, in some cases, it is conceivable that an additional neuron may lead to no change of the model's expressivity at all. On the other hand, counting the exact number of linear regions was shown to be very hard  (\cite{lin2017linear,serra2017bounding}). As a result, many recent studies focus on the methods for estimating the tight upper bounds of numbers of linear regions in the entire input space, as reviewed in a recent article (\cite{serra2017bounding}). 

In this paper, we study the linear regions in a specific area (e.g., a sphere centered at a specific data point) instead of the entire input space. Similar to the previous research, we aim to estimate the upper bound of the number of linear regions for ReLU neural networks, as a measure of the expressivity of the neural networks. Our research is motivated by the recent studies of DNNs on the local input areas, for examples, on how a small perturbation to an adversarial data point will impact the prediction of a DNN for this point (\cite{kurakin2016adversarial}), and how a backdoor trigger can be successfully injected into a DNN by manipulating a small number of training data points(\cite{liao2018backdoor}). Obviously, in these cases, the analysis of the complexity of DNNs in local input areas is critical, which cannot be addressed by a single measure of the expressivity of the DNN in the entire input space. 

{\bf Backdoor Attack.} Backdoor attack(\cite{chen2017targeted}) is one way to trick neural networks. A typical backdoor attack contaminates a small amount of samples into the training dataset, so that the resulting model will mis-classify any sample containing an attacker-designed trigger. Note that if a trigger is successfully injected into a DNN, the trigger-containing samples are either clustered in specific areas in the input space, or clustered in the latent space of the DNN (\cite{salimi2011backdoor}). 
To detect a potential trigger $x$ in a given DNN, one approach is to study the data points around $x$ in the input space or in the latent space. As shown in Fig \ref{fig:decisionboundary}, after a typical backdoor attack, the class labels of the trigger-containing samples fare flipped comparing with the data points in their surrounding areas in the input (or latent) space of the DNN.

{\bf Adversarial Examples.} Another way to trick neural networks is to identify adversarial examples(\cite{goodfellow2014explaining}) without contaminating the training dataset. For given data point $x$ and a neural network classifier $f$, the attacker attempts to add a minimum perturbation $\delta$ to $x$ (denoted as $x'=x+\delta$) so that the classification of the perturbed data is flipped (i.e., $f(x') \ne f(x)$; Fig \ref{fig:illustration}(b). To certify the robustness of the prediction of $x$ by a DNN, some approaches(\cite{cohen2019certified}) aimed to investigate if adversarial examples can be identified in the sphere of the radius $r$ centered at a data point $x$ ~\cite{gehr2018ai2}.

\begin{figure*}[!htbp]
 \centering
 
 \subfigure[]{\includegraphics[width=2.2in]{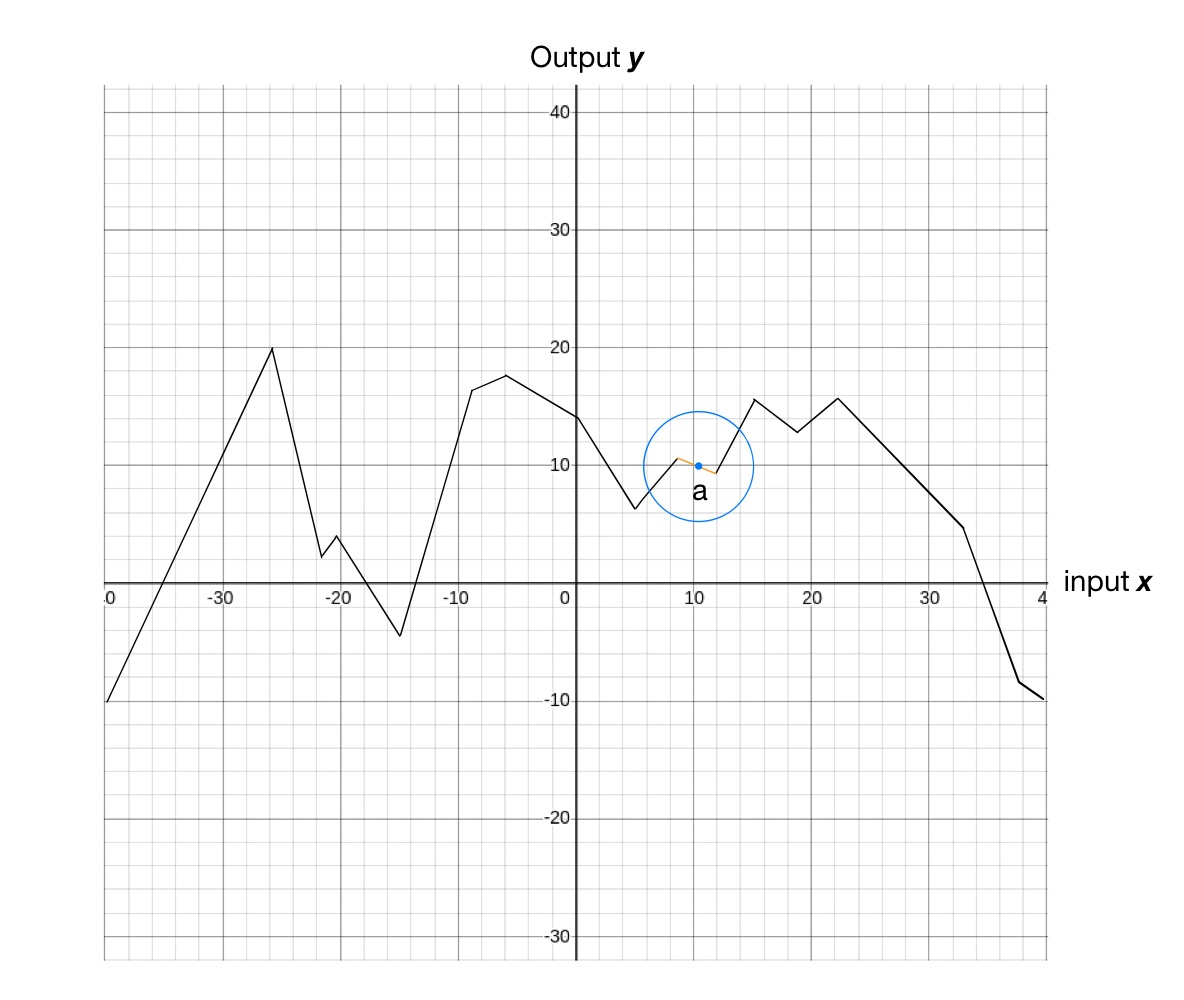}}
 \subfigure[]{\includegraphics[width=2.0in]{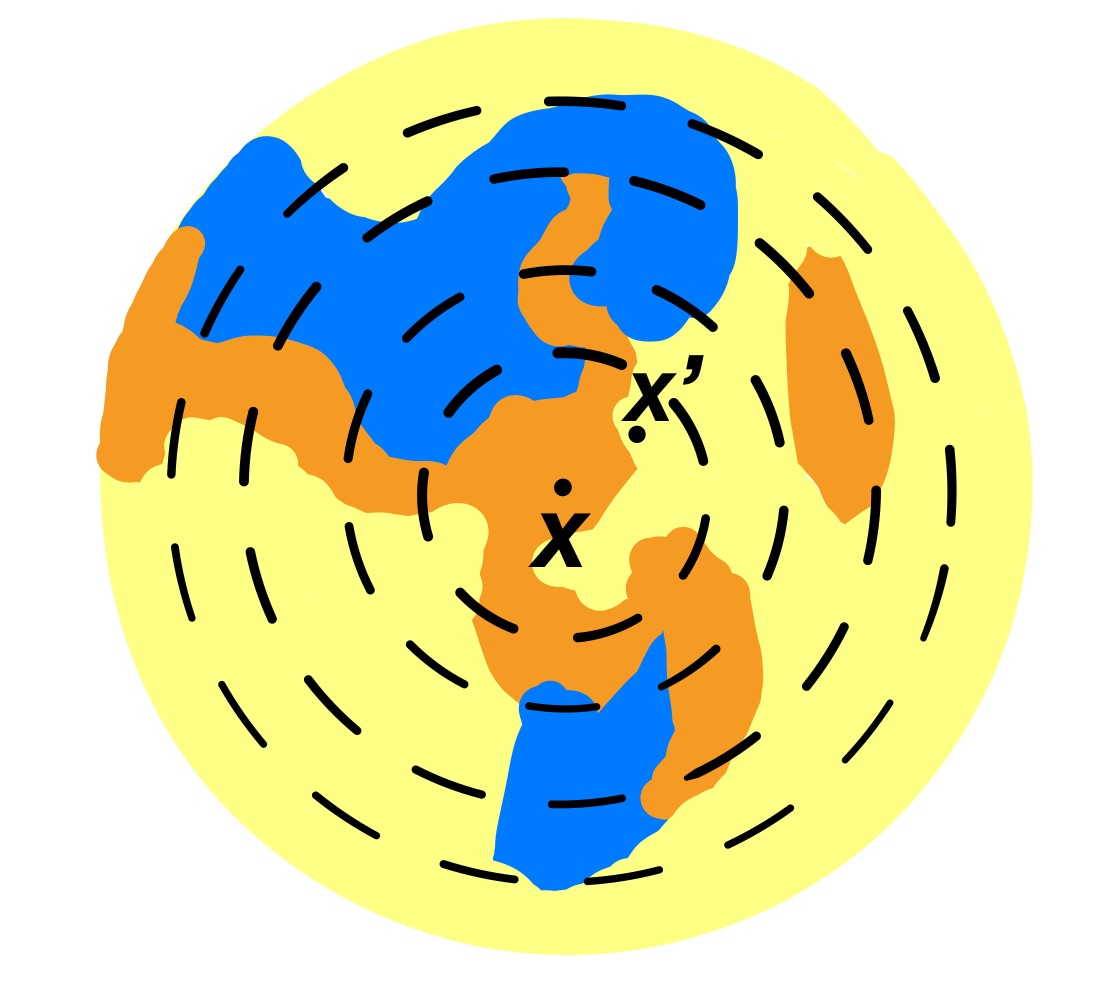}}

  \caption{(a) An schematic illustration of the function computed by a ReLU neural network with the input and output in 1D. The surrounding area of the point $a$ in the input space is depicted in the blue circle, and the corresponding linear region is highlighted in orange. (b) The decision regions (depicted in different colors) in the surrounding regions of the data point $x$. The dotted lines represent the spheres of different radius centered at $x$. }\label{fig:illustration}
\end{figure*}

\begin{figure*}[!htbp]

 \centering
 
 \subfigure[]{\includegraphics[width=2.3in]{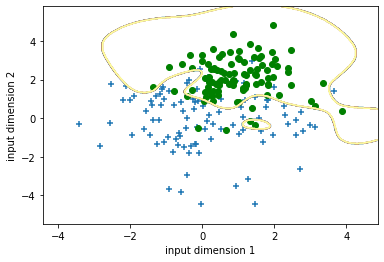}}
 \subfigure[]{\includegraphics[width=2.3in]{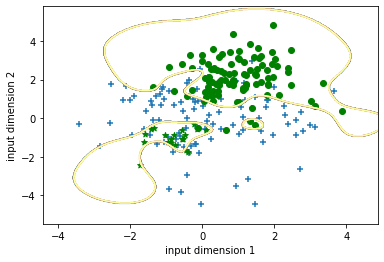}}
 
  \caption{The decision boundaries around the training data samples are illustrated in a 2D example, when the training dataset is not contaminated (a) and when it is contaminated (b). The samples in different classes are depicted in different colors, and the injected trigger-containing samples are depicted as {\em stars}. }\label{fig:decisionboundary}
\end{figure*}

{\bf Contributions.} The main contributions of this paper are:
\begin{itemize}
    \item  To the best of our knowledge, this is the first analysis of the expressivity of neural networks in a specific area of the input space.  
    \item  We prove a theoretical upper bound of the number of linear regions in a sphere centered at a given data point for a given neural network, and implemented an algorithm to compute the upper bound in practice.
    \item We report some interesting observations in our experiments regarding the number of linear regions in local input areas. For example, when we train a DNN, the number of samples in the training dataset that contain a unique linear region in their neighborhood almost always grows with the increasing number of epochs until convergence (or sometimes slightly decreases before the convergence). The same phenomenon is also observed for the testing data samples, while the trends is more stable comparing to the training data.
\end{itemize}

\section{Related work}

We aim to estimate the number of linear regions in a local input area, which contains the data points around a given input data point. Many previous studies have been published on the analysis of the upper bound of the number of linear regions in the entire input space (herein referred to as the {\em global} number of linear regions, comparing with the {\em local} number of linear regions studied in this paper) of ReLU neural networks. There is a trivial upper bound of the global number of linear regions: assuming a ReLU neural network contains $n$ neurons, the global number of linear regions is no greater than $2^n$.

The earliest research of linear regions can be traced back to 1989 when Makhoul and colleagues analyzed the capabilities of two-layer ReLU networks (\cite{makhoul1989classification}). In 2014, the number of linear regions was first estimated for deep neural networks (\cite{montufar2014number}). Because the neural network using piece-wise linear activation is equivalent to the tropical rational function (\cite{zhang2018tropical}), it was shown that an upper bound of the global number of linear regions in a neural network can be computed by using the tropical newton polytope (\cite{charisopoulos2018tropical}). Hanin and Rolnick (\cite{hanin2019complexity}) showed that the average number of linear regions in any one-dimensional subspace grows linearly with respect to the total number of neurons, far below the trivial upper bound that grows exponentially.

\section{An Overview of Our Approach}

In this section, we present an informal description of our approach to computing the upper bound of the local number of linear regions for a neural network. Without loss of generality, throughout this paper, we consider the fully connected neural networks using ReLU activation. 
In a ReLU neural network, the values of the neurons in the $(k+1)$th layer can be computed by taking the values of the neurons in the previous (i.e., the $k$th) layer multiplied by the weight matrix $(M^{k}_{i,j})$ , where $i$ and $j$ are the indices of neurons in the $k$th and $(k+1)$th layers, respectively, and then applying the ReLU activation,
\[
	X^{k+1}_{j} = \max\left(\sum_{i=1}^{m}{M^{k}_{i,j}X^{k}_{i}}, 0\right)
\]
\noindent where $X^{k}_i$ is the value of the $i$th neuron in the $k^{th}$ layer and $X^{k+1}_{j}$ is the value of the $j$th neuron in the $(k+1)$th layer. For simplicity, in this section we assume that the intercepts are always $0$. Note that the max function may not be applied, if the resulting values after the matrix multiplication are all strictly positive. 
Even when the max function is applied for some neurons, the resulting values of those neurons can still be written as linear functions of the values of the neurons in the previous layer, but with a revised weight matrix. For example, if the value $X^{k+1}_{j}$ is computed by applying the max function, we can set a revised weight matrix, in which $M^{k}_{i,j}=0$, for $\forall i$. Therefore, the value $X^{k}_{j}$ can always be represented as a multiplication between a matrix (resulting from the multiplication of the sequence of revised weight matrices) and the input vector $X^{1}$ (or $X$ in short, i.e., the values of the first layer in the neural network),
\[
    X^{k+1}_{j} = \sum_{i=1}^{m}{L^{k+1}_{j,i}X^{1}_{i}} = \sum_{i=1}^{m}{L^{k+1}_{j,i}(X) X_{i}},
\]
\noindent where the matrix $L^{k+1}_{j,i} (X)$ is a function with respect to the input vector $X$.
Apparently, if the elements in the matrix $L^{k}_{j,i} (X)$ are all constant (independent of the input vector), the neural network represents a linear function, and thus has a single linear region in the entire input space. Furthermore, the more frequently $L^{k}_{j,i}(X)$ changes in the input space of $X$, the more linear regions the neural network may have. Hence, $L^{k}_{j,i} (X)$ is associated with the number of the linear regions, even though it is not easy to directly use the matrix to estimate the number of linear regions. Below, we introduce three other matrices, $P^{k}_{r,j}$, $S^{k}_{r,j}$ and $U(k,r)$, $\forall 1\le k\le s$, that will be used for the estimation.

For a specific input data point (vector) in $m$ dimensions, $X\in\R^m$, we denote $B_r(X^{*})$ as the sphere centered at $X^{*}$ with the radius of $r$, in which the points other than $X^{*}$ are referred to as the {\em perturbations} of $X^{*}$. If we restrict the input data in the sphere, the range of values of the neurons in each hidden layer can be estimated subsequently. We define $U(k,r)$ as the largest distance between the vector of the $k$th layer's neurons from the center input vector $X^{*}$ and those from the perturbations, which is important for estimating the local number of the linear regions as it constrains the values of all neurons in the neural network. 

Next, we define $P^{k}_{r,j}$ and $S^{k}_{r,j}$ in order to estimate how frequently $L^{k}_{j,i}(X)$ changes among perturbations in the sphere centered at $X^{*}$ with the radius of $r$. We set $P^{k}_{r,j}$ to be $0$ if the matrix $L^{k}_{j,i}(X)$ at the perturbation $X\in B_r(X^{*})$ is the same as $L^{k}_{j,i}(X^{*})$ at $X^{*}$; otherwise we set $P^{k}_{r,j}=1$. We denote $A_{j}^{k+1}=\sum_{i=1}^{m} {M_{i,j}^{k}X^{k}_{i}}$ and thus $X^{k+1}_{j}=\max(A_{j}^{k+1},0)$. Similarly, let $A^{*(k+1)}_{j}=\sum_{i=1}^m M_{i,j}^{k} X^{*k}_{i}$
Then we set $S^{k+1}_{r,j}=0$, if $A_{j}^{k+1}$ and $A^{*(k+1)}_{j}$ have the same sign, i.e., the max function is used for computing either both $X^{k+1}_{j}$ and $X^{*(k+1)}_{j}$ (i.e., the values of neurons for the input vectors of $X$ and $X^{*}$, respectively), or neither of them; and otherwise, we set $S^{k+1}_{r,j}=1$.  
Therefore, if $P^{k}_{r,j}=0$ or $S^{k}_{r,j}=0$, $X$ and $X^{*}$ are in the same linear region with regard to the values in the layer $k$. Notably, if we know the number ($C$) of pairs $(k, j)$ such that $S^{k}_{r,j}=1$, then the local number of linear regions for the data points in the sphere centered at $X^{*}$ with the radius $r$ is no more than $2^C$. In the next section, we will introduce the algorithm to compute $P^{k}_{r,j}$, $S^{k}_{r,j}$ and $U(k,r)$ recursively; the correctness of the algorithm can be proved using induction.

\section{Algorithm for bounding the local number of linear regions}

Without loss of generality, we assume a given neural network has $s$ layers, each with the same number of $m$ neurons. For $1\le k\le s$ and $1\le j\le m$, there is a matrix $M^{k}\in R^{m\times m}$ and a vector $C^{k}\in \R^{m}$ such that the values of the neuron $j$ in the layer $k$, $X^{k}_{j}$ can be computed by,

\begin{equation}\label{eqn:threshold-function}
	X^{k+1}_{j} = \max\left(\sum_{i=1}^{m}{M^{k}_{i,j}X^{k}_{i}} + C^{k}_{j}, 0\right) \: \forall 1\le j\le m, 1\le k\le s.
\end{equation}
	
Alternatively, we have $L^{k}_{j,i}, C_{k,j} \in \R$ such that

\begin{equation}\label{eqn:coeff}
	X^{k+1}_{j} = \sum_{i=1}^{m}{L^{k+1}_{j,i}X^{1}_{i}} + C_{k,j} \: \forall 1\le j\le m, 1\le k\le s.
\end{equation}

We denote $\mathbf{L}^{k}_{j}$ as the vector $\left(L^{k}_{j,i}\right)_{1\le i\le m}$, and 

\begin{equation}\label{eqn:absolute}
	A^{k+1}_{j} = \sum_{i=1}^{m}{M^{k}_{i,j}X^{k}_{i}} + C^{k}_{j}.
\end{equation}

For the fixed $r>0$ and $X\in \R^{m}$, we define $B_{r}(X) = \{X'\in \R^{m} \mid ||X'-X||\le r \}$. We then recursively compute $P^{k}_{r,j}, S^{k}_{r,j} \in \{0,1\}$ and $U^{k}_{r,j}\ge 0$ using Algorithm \ref{alg:onelayer}. The time and space complexity of Algorithm \ref{alg:onelayer} are both $O(m^2)$, where $m$ is the number of neurons in the $k^{th}$ layer. When applying the algorithm recursively to the whole neural network with $s$ layers, the total running time complexity for computing the upper bound of the local number of linear regions becomes $O(m^2s)$, while the space requirement remains $O(m^2)$.

\begin{algorithm}[ht]
	\begin{algorithmic}
		\caption{Induction of one layer.}\label{alg:onelayer}
		\Function{Nextround}{$k,r$} \\
		\textbf{Input}: $M^{k}, L^{k}_{j,i}, A^{k+1}_{j}, P^{k}_{r,j}, S^{k}_{r,j}, U^{k}_{r,j}$ for fixed $k$ and all $1\le i,j\le m$; \\
		\textbf{Output}: $P^{k+1}_{r,j}, S^{k+1}_{r,j}, U^{k+1}_{r,j}$ for all $1\le j\le m$.
		\For{$1\le j\le m$}
			\State $U \leftarrow 0$; $L \leftarrow \mathbf{0}\in \R^{m}$
			\For {$1\le i\le m$}
				\If{$P^{k}_{r,i} = 0$}
					\State $L \leftarrow L + M^{k}_{i,j}\cdot \mathbf{L}^{k}_{i}$
				\Else
					\State $U \leftarrow U + |M^{k}_{i,j}|\cdot U^{k}_{r,i}$ \EndIf
			\EndFor
			\State $U^{k+1}_{r,j} \leftarrow r\cdot ||L||_{2} + U$
			\If{$A^{k+1}_{j} < 0$}
				\If{$U^{k+1}_{r,j}+A^{k+1}_{j} < 0$}
					\State $U^{k+1}_{r,j} \leftarrow 0$; $P^{k+1}_{r,j} \leftarrow 0$; $S^{k+1}_{r,j} \leftarrow 0$
				\Else
					\State $U^{k+1}_{r,j} \leftarrow U^{k+1}_{r,j}+A^{k+1}_{j}$; $P^{k+1}_{r,j} \leftarrow 1$; $S^{k+1}_{r,j} \leftarrow 1$
				\EndIf
			\Else
				\If{$U^{k+1}_{r,j} < A^{k+1}_{j}$}
					\If{$U=0$}
						\State $P^{k+1}_{r,j} \leftarrow 0$; $S^{k+1}_{r,j} \leftarrow 0$
					\Else
						\State $P^{k+1}_{r,j} \leftarrow 1$; $S^{k+1}_{r,j} \leftarrow 0$
					\EndIf
				\Else
					\State $P^{k+1}_{r,j} \leftarrow 1$; $S^{k+1}_{r,j} \leftarrow 1$
				\EndIf
			\EndIf
		\EndFor
		\State Return $P^{k+1}_{r,j}, S^{k+1}_{r,j}, U^{k+1}_{r,j}$
		\EndFunction
	\end{algorithmic}
\end{algorithm}

Consider the following conditions:

\begin{itemize}
	\item $P(k,r)$ - for all $1\le j\le m$, if $P^{k}_{r,j}=0$, then for all $X'^{1}\in B_{r}(X^{1})$ we have $X'^{k}_{j} = \sum_{i=1}^{m}{L^{k}_{j,i}X'^{1}_{i}} + C_{k,j}$.
	\item $S(k,r)$ - for all $1\le j\le m$, if $S^{k}_{r,j}=0$, then for all $X'^{1}\in B_{r}(X^{1})$ we have $A'^{k}_{j}\cdot A^{k}_{j}>0$.
	\item $U(k,r)$ - for all $1\le j\le m$, for all $X'^{1}\in B_{r}(X^{1})$ we have
	$\left|X'^{k}_{j} - X^{k}_{j} \right|\le U^{k}_{r,j}$.
\end{itemize}

\begin{prop}\label{prop:induction}
	Following the input and output of Algorithm \ref{alg:onelayer}, if $P(k,r), S(k,r), U(k,r)$ satisfy the conditions, then $P(k+1,r), S(k+1,r), U(k+1,r)$ also satisfy the conditions.
\end{prop}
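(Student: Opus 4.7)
The plan is to prove the three conditions at layer $k+1$ by expressing the pre-activation difference $A'^{k+1}_j - A^{k+1}_j$ in terms of quantities controlled by the inductive hypotheses $P(k,r)$, $S(k,r)$, $U(k,r)$, and then matching the case split performed by Algorithm~\ref{alg:onelayer} step by step.

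First I would fix a perturbation $X'^{1}\in B_r(X^1)$ and an index $j$ in layer $k+1$, and expand
\[
A'^{k+1}_j - A^{k+1}_j \;=\; \sum_{i=1}^m M^{k}_{i,j}\bigl(X'^{k}_i - X^{k}_i\bigr).
\]
I would split this sum by whether $P^k_{r,i}=0$ or $P^k_{r,i}=1$. On $\{i : P^k_{r,i}=0\}$ the hypothesis $P(k,r)$ lets me substitute $X'^k_i - X^k_i = \sum_l L^k_{i,l}(X'^1_l - X^1_l)$, so this group contributes $\langle L,\, X'^1 - X^1\rangle$ with $L := \sum_{i:\,P^k_{r,i}=0} M^{k}_{i,j}\,\mathbf{L}^k_i$, which is exactly the vector accumulated in the algorithm; Cauchy--Schwarz bounds its magnitude by $r\|L\|_2$. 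On $\{i : P^k_{r,i}=1\}$ the hypothesis $U(k,r)$ gives $|X'^k_i - X^k_i|\le U^k_{r,i}$, and the triangle inequality bounds this second piece by $U := \sum_{i:\,P^k_{r,i}=1} |M^k_{i,j}|\,U^k_{r,i}$. Combining,
\[
\bigl|A'^{k+1}_j - A^{k+1}_j\bigr| \;\le\; r\|L\|_2 + U,
\]
which is precisely the preliminary value of $U^{k+1}_{r,j}$ computed by the algorithm.

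Next I would carry out the four-way case split on the sign of $A^{k+1}_j$ and on whether the bound above permits a sign change. If $A^{k+1}_j<0$ and $r\|L\|_2+U+A^{k+1}_j<0$, then $A'^{k+1}_j<0$ throughout $B_r$, so $X^{k+1}_j = X'^{k+1}_j = 0$; this yields an exact affine representation with zero coefficients (justifying $P^{k+1}_{r,j}=0$), forces $S^{k+1}_{r,j}=0$, and $U^{k+1}_{r,j}=0$ is tight. If $A^{k+1}_j<0$ but a sign change is possible, the algorithm sets $P^{k+1}_{r,j}=S^{k+1}_{r,j}=1$, and from $X'^{k+1}_j = \max(A'^{k+1}_j,0) \le A^{k+1}_j + r\|L\|_2 + U$ together with $X^{k+1}_j=0$, the updated $U^{k+1}_{r,j}\leftarrow r\|L\|_2+U+A^{k+1}_j$ indeed bounds $|X'^{k+1}_j - X^{k+1}_j|$. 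Symmetrically, when $A^{k+1}_j \ge 0$ and $r\|L\|_2 + U < A^{k+1}_j$, the ReLU is inactive throughout $B_r$, so $X'^{k+1}_j = A'^{k+1}_j$, giving $S^{k+1}_{r,j}=0$ and $|X'^{k+1}_j - X^{k+1}_j| \le r\|L\|_2 + U$. When instead $A^{k+1}_j \ge 0$ but $r\|L\|_2+U \ge A^{k+1}_j$, a sign flip is possible, so $P^{k+1}_{r,j}=S^{k+1}_{r,j}=1$; on the clipped side one has $|0 - A^{k+1}_j| \le |A'^{k+1}_j - A^{k+1}_j|$, so the generic bound $U^{k+1}_{r,j}=r\|L\|_2+U$ still suffices.

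The main obstacle I expect is the careful bookkeeping in the subcase $A^{k+1}_j \ge 0$ with $r\|L\|_2+U < A^{k+1}_j$: asserting $P^{k+1}_{r,j}=0$ is justified precisely when $U=0$, because any nonzero contribution coming from a layer-$k$ neuron with $P^k_{r,i}=1$ can in principle prevent $X'^{k+1}_j$ from being a fixed affine function of $X'^1$ on all of $B_r(X^1)$, so the algorithm must conservatively fall back to $P^{k+1}_{r,j}=1$ otherwise. Verifying this dichotomy, together with checking that the ReLU-clipping behavior is correctly absorbed into the $U$ update in the sign-flip cases, is the delicate part; everything else follows routinely from the decomposition above and the inductive hypotheses $P(k,r), S(k,r), U(k,r)$.
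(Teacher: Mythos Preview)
Your proposal is correct and follows essentially the same route as the paper: the key inequality $|A'^{k+1}_j - A^{k+1}_j|\le r\|L\|_2 + U$ is obtained exactly as in the paper's Lemma~\ref{lem:inequality} (split by $P^{k}_{r,i}$, apply $U(k,r)$ on one part and Cauchy--Schwarz on the other), and the subsequent case analysis mirrors the paper's, only organized by sign/threshold rather than by the target conclusions $P^{k+1}_{r,j}=0$, $S^{k+1}_{r,j}=0$, $U(k+1,r)$. Your handling of the clipped subcase $A^{k+1}_j\ge 0$ with a possible sign flip (noting $|0 - A^{k+1}_j|\le |A'^{k+1}_j - A^{k+1}_j|$) is in fact slightly more explicit than the paper's.
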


\begin{lemma}\label{lem:inequality}
	If $P(k,r)$ and $U(k,r)$ satisfy the conditions, then
	\[|A'^{k+1}_{j} - A^{k+1}_{j}| \le \sum_{P^{k}_{i}=1}{\left|M^{k}_{i,j}\right|\cdot U^{k}_{r,i}} + r\cdot \left|\left|\sum_{P^{k}_{i}=0}{M^{k}_{i,j}\cdot L^{k}_{i}}\right|\right|_{2}. \]
\end{lemma}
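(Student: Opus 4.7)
The plan is to expand $A^{k+1}_{j} - A'^{k+1}_{j}$ using the definition in Equation \eqref{eqn:absolute}, split the resulting sum over $i$ according to whether $P^{k}_{r,i} = 0$ or $P^{k}_{r,i} = 1$, and bound the two sub-sums by two different tools: the pointwise bound from $U(k,r)$ for the $P^{k}_{r,i}=1$ terms, and Cauchy--Schwarz combined with the linear representation from $P(k,r)$ for the $P^{k}_{r,i}=0$ terms.

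First I would observe that since the constants $C^{k}_{j}$ cancel,
\[
A'^{k+1}_{j} - A^{k+1}_{j} \;=\; \sum_{i=1}^{m} M^{k}_{i,j}\bigl(X'^{k}_{i} - X^{k}_{i}\bigr)
\;=\; \underbrace{\sum_{P^{k}_{r,i}=0} M^{k}_{i,j}\bigl(X'^{k}_{i} - X^{k}_{i}\bigr)}_{(\mathrm{I})} \;+\; \underbrace{\sum_{P^{k}_{r,i}=1} M^{k}_{i,j}\bigl(X'^{k}_{i} - X^{k}_{i}\bigr)}_{(\mathrm{II})}.
\]
By the triangle inequality it suffices to bound $|(\mathrm{I})|$ by the $r\|\cdot\|_{2}$ term and $|(\mathrm{II})|$ by the sum of $|M^{k}_{i,j}|\cdot U^{k}_{r,i}$ term in the claimed inequality.

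For term (II), the hypothesis $U(k,r)$ gives $|X'^{k}_{i} - X^{k}_{i}| \le U^{k}_{r,i}$ for every $i$, so the triangle inequality immediately yields $|(\mathrm{II})| \le \sum_{P^{k}_{r,i}=1} |M^{k}_{i,j}| \cdot U^{k}_{r,i}$. For term (I), the hypothesis $P(k,r)$ applied at both $X'^{1}$ and at $X^{1}$ (note $X^{1} \in B_{r}(X^{1})$ trivially) gives, for each $i$ with $P^{k}_{r,i}=0$,
\[
X'^{k}_{i} - X^{k}_{i} \;=\; \sum_{\ell=1}^{m} L^{k}_{i,\ell}\bigl(X'^{1}_{\ell} - X^{1}_{\ell}\bigr) \;=\; \mathbf{L}^{k}_{i} \cdot (X'^{1} - X^{1}).
\]
Substituting and interchanging the order of summation,
\[
(\mathrm{I}) \;=\; \Biggl(\sum_{P^{k}_{r,i}=0} M^{k}_{i,j}\,\mathbf{L}^{k}_{i}\Biggr) \cdot (X'^{1} - X^{1}),
\]
so Cauchy--Schwarz together with $\|X'^{1} - X^{1}\|_{2} \le r$ bounds $|(\mathrm{I})|$ by $r \cdot \bigl\|\sum_{P^{k}_{r,i}=0} M^{k}_{i,j}\,\mathbf{L}^{k}_{i}\bigr\|_{2}$, completing the proof.

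I do not anticipate any serious obstacle; the argument is essentially a bookkeeping exercise combining linearity with the three hypotheses. The only subtle point is justifying that $P(k,r)$ gives the linear representation at the \emph{center} $X^{1}$ as well as at the perturbation $X'^{1}$, which holds because $X^{1} \in B_{r}(X^{1})$, so both points lie in the ball to which $P(k,r)$ applies. A second minor care-point is keeping the Euclidean norm together for the whole $P^{k}_{r,i}=0$ block rather than naively applying the triangle inequality coordinate-wise, since the latter would give a strictly weaker bound than the one stated.
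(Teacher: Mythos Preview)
Your proposal is correct and follows essentially the same approach as the paper's proof: expand $A'^{k+1}_j - A^{k+1}_j$, split the sum according to whether $P^{k}_{r,i}=0$ or $1$, bound the $P^{k}_{r,i}=1$ part termwise via $U(k,r)$, and bound the $P^{k}_{r,i}=0$ part by substituting the linear representation from $P(k,r)$, interchanging sums, and applying Cauchy--Schwarz with $\|X'^{1}-X^{1}\|_{2}\le r$. Your explicit remark that $X^{1}\in B_{r}(X^{1})$ justifies using $P(k,r)$ at the center is a nice clarification that the paper leaves implicit.
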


\begin{proof}
	By definition we have
	\begin{align*}
		\left|A'^{k+1}_{j} - A^{k+1}_{j}\right| =&\left|\sum_{i=1}^{m}{M^{k}_{i,j}\cdot \left(X'^{k}_{i} - X^{k}_{i}\right)} \right|
		\le\left|\sum_{P^{k}_{i}=1}{M^{k}_{i,j}\cdot \left(X'^{k}_{i} - X^{k}_{i}\right)} \right| + \left| \sum_{P^{k}_{i}=0}{M^{k}_{i,j}\cdot \left(X'^{k}_{i} - X^{k}_{i}\right)} \right| \\
		\le&\sum_{P^{k}_{i}=1}{|M^{k}_{i,j}|\cdot \left|X'^{k}_{i} - X^{k}_{i}\right|} + 
		\left|\sum_{P^{k}_{i}=0}{M^{k}_{i,j}\cdot \left[\sum_{l=1}^{m}{L^{k}_{i,l}\cdot \left(X'^{1}_{l} - X^{1}_{l}\right)} \right]} \right| \\
		\le& \sum_{P^{k}_{i}=1}{|M^{k}_{i,j}|\cdot U^{k}_{r,i}} + \left|\sum_{l=1}^{m}{\left[\sum_{P^{k}_{i}=0}{M^{k}_{i,j}\cdot L^{k}_{i,l}}\right] \cdot \left(X'^{1}_{l} - X^{1}_{l}\right)} \right| \\
	\end{align*}
	
	Applying the Cauchy-Schwarz inequality, 
	
	\begin{align*}
		& \left|\sum_{l=1}^{m}{\left[\sum_{P^{k}_{i}=0}{M^{k}_{i,j}\cdot L^{k}_{i,l}}\right] \cdot \left(X'^{1}_{l} - X^{1}_{l}\right)} \right| \le  \sqrt{\left[\sum_{l=1}^{m}{\left(\sum_{P^{k}_{i}=0}{M^{k}_{i,j}\cdot L^{k}_{i,l}}\right)^{2}} \right] \cdot \left[\sum_{l=1}^{m}{\left(X'^{1}_{l} - X^{1}_{l}\right)^{2}}\right] } \\
		= & \left|\left|\sum_{P^{k}_{i}=0}{M^{k}_{i,j}\cdot \mathbf{L}^{k}_{i}}\right|\right|_{2}\cdot ||X'^{1} - X^{1}||_{2} \le r\cdot \left|\left|\sum_{P^{k}_{i}=0}{M^{k}_{i,j}\cdot L^{k}_{i}}\right|\right|_{2}.
	\end{align*}
	
	So Lemma \ref{lem:inequality} is proved.
\end{proof}

\begin{proof}(Proof of Proposition \ref{prop:induction})
	Assuming $P(k,r), S(k,r), U(k,r)$ satisfy the conditions, we now prove that $P(k+1,r), S(k+1,r), U(k+1,r)$ also satisfy the conditions.
	
	Suppose there is an index $j$ such that $P^{k+1}_{r,j}=0$, then either $A^{k+1}_{j}<0$ and $r\cdot ||L||_{2} + U + A^{k+1}_{j}<0$, or $A^{k+1}_{j}>U^{k+1}_{j}>0$ and $U=0$. For the first case, since $A^{k+1}_{j}<0$, we have $X^{k+1}_{j} = 0$ and $L^{k+1}_{j} = \mathbf{0}$. According to Lemma \ref{lem:inequality}, for any $X'^{1}\in B_{r}(X^{1})$, we have $A'^{k+1}_{j} \le A^{k+1}_{j} + r\cdot ||L||_{2} + U <0 $, so $X'^{k+1}_{j}=0$ and thus $P(k+1,r)$ satisfies the condition. For the second case, $U=0$ implies that for all indices $i$ such that $M^{k}_{i}\ne 0$, we have $P^{k}_{r,i}=0$. By definition, for any $X'^{1}\in B_{r}(X^{1})$, we have
	
	\[X'^{k}_{i} = \sum_{i=1}^{m}{L^{k}_{j,i}X'^{1}_{i}} + C_{k,j}. \]
	
	\noindent for all $1\le i\le m$. Then $A'^{k+1}_{j} = \sum_{i=1}^{m}{L^{k+1}_{j,i}X'^{1}_{i}} + C_{k+1,j}$. According to Lemma \ref{lem:inequality}, we have $A'^{k+1}_{j} > A^{k+1}_{j} - U^{k+1}_{j} > 0$, so $X'^{k+1}_{j} = A'^{k+1}_{j}$ and thus $P(k+1,r)$ satisfies the conditions.
	
	Similarly, if $j$ is such that $S^{k+1}_{r,j}=0$, then either $A^{k+1}_{j}<0$ and $r\cdot ||L||_{2} + U + A^{k+1}_{j}<0$, or $A^{k+1}_{j}>U^{k+1}_{j}>0$. Then for any $X'^{1}\in B_{r}(X^{1})$, in the former case we have $A^{k}_{j},A'^{k}_{j}<0$ and in the latter case we have $A^{k}_{j},A'^{k}_{j}>0$, so $S(k+1,r)$ satisfies the conditions.
	
	Finally, when $A^{k+1}_{j}>0$, Lemma \ref{lem:inequality} becomes $\left|A'^{k+1}_{j} - A^{k+1}_{j} \right| \le U^{k+1}_{j} $, so we have $\left|X'^{k+1}_{j} - X^{k+1}_{j} \right| \le U^{k+1}_{j}$; when $A^{k+1}_{j}<0$, we have $X^{k+1}_{j} = 0$. In the case when $P^{k+1}_{r,j}=S^{k+1}_{r,j}=0$, we already showed that $X'^{k+1}_{j}=0$, so we can choose $U^{k+1}_{j}=0$; otherwise
	
	\[\left|X'^{k+1}_{j} - X^{k+1}_{j} \right| = |X'^{k+1}_{j}| \le |A'^{k+1}_{j}| \le A^{k+1}_{j} + r\cdot ||L||_{2} + U = U^{k+1}_{j}. \]
	
	So $U(k+1,r)$ satisfies the conditions.
\end{proof}

\begin{rem}\label{ren:interpret}
	The motivation of defining $P^{k}_{j}$ is that if $P^{k}_{j} = 0$, then for any other input within $B_{r}(X^{1})$, the neuron $X'^{k}_{j}$ shares the same linear function with $X^{k}_{j}$ in terms of $X^{1}$. Similarly, if $S^{k}_{j} = 0$, then for any other input within $B_{r}(X^{1})$, the neuron $X'^{k}_{j}$ would have the same sign as $X^{k}_{j}$.
\end{rem}

\begin{rem}\label{rem:estimate}
	According to the definition of linear regions, $X'^{1}$ and $X^{1}$ belong to the same open linear region if and only if $A'^{k}_{j}\cdot A^{k}_{j}>0$ for all $1\le j\le m$ and $2\le k\le s+1$. Let $C$ be the number of pairs $(k,j)$ satisfying $S^{k}_{j}=1$. Then the number of open linear regions intersecting $B_{r}(X^{1})$ is at most $2^{C}$, which gives an upper bound of the local number of linear regions in the area of $B_{r}(X^{1})$.
\end{rem}

\section{Experiment}
We implemented Algorithm 1, and used it to investigate how the local number of linear regions changes during the training process around different categories of data points, including the training data, the testing data, and two kinds of random data. Due to the high computational cost, we limited the size of those data to 1000 samples for each category and also constrained the input and output dimensions as discussed below. 

\subsection{Dataset and experimental neural network}

We experimented on four different datasets from MNIST (~\cite{lecun1998mnist}), each with 1000 samples. For the training and testing dataset, we randomly picked two digits from MNIST, 500 for the digit ``0" and 500 for the digit ``4", and aimed to build a binary classification model. To further reduce the dimension of the input, we performed average pooling of the pixels to reduce the images in the training and testing datasets to 6$\times$6 pixels. Note that, here the testing dataset is not used to test the model accuracy; instead, we want to monitor the results of Algorithm 1 when we use the testing data as input. Two other {\em random} datasets were constructed for the purpose of comparison: the random dataset 1 consisting of random images with each pixel sampled randomly from the uniform distribution between the minimum and maximum values of the respective pixels from the images in the the training dataset, and the random dataset 2 consisting of random images with each pixel sampled randomly from the values in the full grayscale (between 0 and 255). We aim to monitor the local numbers of linear regions (approximated by their upper bounds computed using Algorithm 1) around the data points in these four different datasets. Intuitively, we anticipate the testing dataset is more similar to the training dataset than the two random datasets, while the random dataset 1 is more similar to the training dataset than the random dataset 2.

We experimented on the neural network with two and three hidden layers that is initialized by using the normal distribution. We used SGD optimizer with the study rate of 0.02 and 0.001 (Fig \ref{fig:linearregion} (a) and (b) depict the results when using 0.02, while (c) and (d) depict the results when using 0.001). 

\subsection{Results}
For every two epochs, we used Algorithm 1 to compute the upper bound of the local number of linear regions around each data point in the four datasets. In total, 20 epochs were performed, with 10 measures of upper bound. We set the radius $r = 0.4$, which is relatively small for this model, so that the upper bound computed by Algorithm 1 is tight. Notably,  when the perturbation (radius) is small, the number of linear regions is relatively small. For instance, if the perturbation is small, the values $L^1_{j,i} (X)$ of the first layer are more likely to remain unchanged comparing with those of the input $X$. One the other hand, a small perturbation guarantees a small $U(1,r)$, and thus the values $L^2_{j,i} (X)$ of the second layer tend to remain the same as those of the input $X$ while the values $U(2,k)$ are small. Using the same argument, intuitively, the values $L^{k}_{j,i} (X)$, $\forall 1\le k\le s$, are likely the same as those of the input $X$, which implies that there are relatively fewer linear regions for small radius and thus the upper bound is tighter. 
In the extreme case, for the local areas with $C=0$ (i.e., there is only $2^0 = 1$ linear regions around an input data point), this upper bound becomes the exact local number of linear regions. So we also counted the number of data points with $C=0$. We performed each experiment ten times to reduce the random effect. Figure \ref{fig:linearregion} shows the results from the experiment.

{\bf Upper bound of number of linear regions.}
For the four categories of data points, we investigate how the average upper bound of the local numbers of linear regions around them during the training process in 20 epochs of the binary classification model. The results are shown in Fig. \ref{fig:linearregion} (a) and (c).

{\bf Number of data points with only one linear region in their surrounding spheres.}
We counted the number of data points containing a unique linear region in their surrounding spheres (i.e., $C=0$, and the upper bound becomes the exact local number of linear regions). Fig. \ref{fig:linearregion}(b) and (d) depicted the numbers of such data points in each of the four categories during the training process.

From the results, we observed that, while training the model, the linear regions in the areas around all four different kinds of data points tend to decrease in the first several epochs, and then increases. Also, we observed the average distance of the data points to their nearest boundaries follow the same trend (first increase and then decrease) during the training process. Notably, this phenomenon is consistent with the previous research on the global number of linear regions of DNNs, which showed the same trend (\cite{hanin2019complexity}). We note that this trend is more clear on training and testing data (than the random data), while between them this change through training is less stable.

\begin{figure*}[!htbp]

 \centering
 
 \subfigure[]{\includegraphics[width=1.92in]{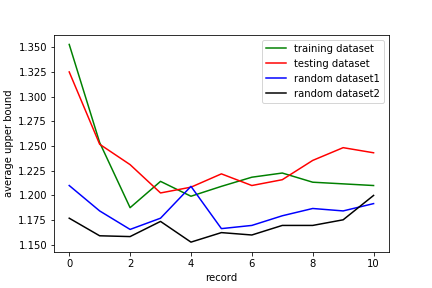}}
 \subfigure[]{\includegraphics[width=1.92in]{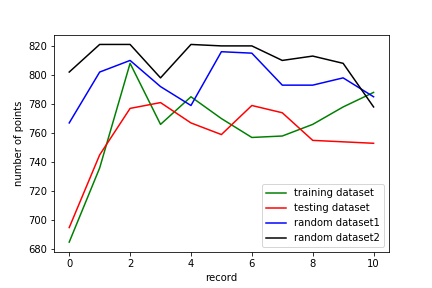}}

 \subfigure[]{\includegraphics[width=1.92in]{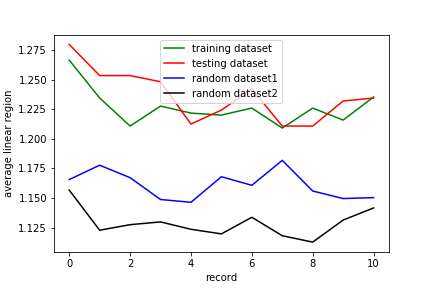}}
 \subfigure[]{\includegraphics[width=1.92in]{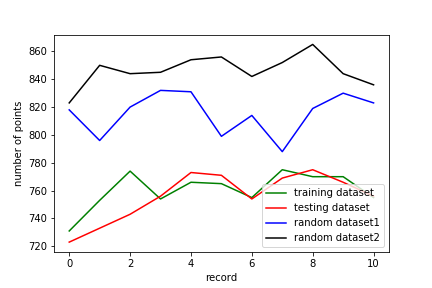}}

 \caption{The change of the average upper bound of the local number of linear regions (a and c) and number of points that contain only one linear region (b and d) around the different categories of data points during the training in 20 epochs of the binary classification model. The upper bounds and the number of points after every two epochs are depicted here. Hence, a trajectory of ten points (x-axis) of the upper bounds (y-axis) are depicted for each category of data. 
 (c) and (d) depicted similar results as (a) and (b) on replicated experiments with a smaller learning rate. The learning rate in (a) and (b) is  $lr = 2 \times 10^{-2}$, while the learning rate in $lr = 1 \times 10^{-3}$.}\label{fig:linearregion}
\end{figure*}

\section{Conclusion and Future Works}

In this paper, we present a novel approach to studying the expressivity of ReLU neural networks, i.e., to compute the upper bound of the local number of linear regions in the input space. To the best of our knowledge, this is the first study of bounding linear regions around a specific data point. We implemented the algorithm to this upper bound for any ReLU neural network, and released the implementation. We consider our research as a first step for the analysis of the expressivity of a ReLU neural network around a specific data point. We expect many potential future research following the direction of this study. For instance, we can devise a Markov Chain Monte Carlo (MCMC) algorithm, starting from a random data point, to find the spheres with a maximum number of linear regions in the entire input space. These local areas may correspond to the neighborhoods of putative adversarial examples or backdoor triggers.

\end{document}